\documentclass[10pt,twocolumn]{IEEEtran}
\usepackage{tikz}
\usepackage{enumitem}
\usepackage{graphicx}
\usepackage{amssymb}
\usepackage{xcolor}
\usetikzlibrary{positioning, decorations.pathreplacing}

\IEEEoverridecommandlockouts

\usepackage{amsmath, amssymb, cite}
\usepackage{amsthm,bm,bbm}
\usepackage{mathtools}

\usepackage[font=footnotesize,labelfont=bf]{caption}
\usepackage{amsthm,multirow,color,amsfonts}
\usepackage{tabulary}
\usepackage{subfigure}
\usepackage{graphicx}
\usepackage{setspace}
\usepackage{enumerate}

\usepackage[ruled]{algorithm2e}

\usepackage{comment}
\usepackage{pdfpages}
\usepackage{hyperref}
\usepackage{stfloats}
\usepackage{lscape, hhline}

\usepackage{etoolbox}
\makeatletter
\patchcmd{\@makecaption}
  {\scshape}
  {}
  {}
  {}
\makeatother

\makeatletter
\def\BState{\State\hskip-\ALG@thistlm}
\makeatother

\usepackage[utf8]{inputenc} 
\usepackage[T1]{fontenc}
\usepackage{url}
\usepackage{ifthen}

\usepackage[dvipsnames]{xcolor}

\newtheorem{theo}{Theorem}

\newtheorem{remark}{Remark}

\def\compactify{\itemsep=0pt \topsep=0pt \partopsep=0pt \parsep=0pt}
\let\latexusecounter=\usecounter

\ifx\du\undefined
  \newlength{\du}
\fi
\usepackage{siunitx}
\usepackage{tikz}
\usetikzlibrary{shapes.geometric}
\usepackage{verbatim}
\usepackage{color,soul}
\usetikzlibrary{spy,calc}
\hypersetup{breaklinks=true}
\usepackage{diagbox}

\begin{document}

\title{Mixture-of-Experts under Finite-Rate Gating: \\ Communication--Generalization Trade-offs}

\author{Ali Khalesi  and Mohammad Reza Deylam Salehi~
\thanks{A.~Khalesi is an \textit{Assistant Professor} at Institut Polytechnique des Sciences Avanc\'ees (IPSA) and LINCS Lab, Paris, France (ali.khalesi@ipsa.fr).
M.R.~Deylam Salehi is an IEEE member, Nice, France (reza.deylam@ieee.org).}
}

\maketitle
\begin{abstract}
Mixture-of-Experts (MoE) architectures decompose prediction tasks into specialized expert sub-networks selected by a gating mechanism. This letter adopts a communication-theoretic view of MoE gating, modeling the gate as a stochastic channel operating under a finite information rate. Within an information-theoretic learning framework, {we specialize a mutual-information generalization bound and develop a rate-distortion characterization $D(R_g)$ of finite-rate gating, where $R_g:=I(X; T)$, yielding (under a standard empirical rate-distortion optimality condition) $\mathbb{E}[R(W)] \le D(R_g)+\delta_m+\sqrt{(2/m)\, I(S; W)}$. }The analysis yields capacity-aware limits for communication-constrained MoE systems, and numerical simulations on synthetic multi-expert models empirically confirm the predicted trade-offs between gating rate, expressivity, and generalization.
\end{abstract}

\begin{IEEEkeywords}
Mixture of Experts, Generalization Bounds, Communication Trade-off, Rate-Distortion
\end{IEEEkeywords}

\section{Introduction}
\label{sec:intro}

MoE models~\cite{jacobs1991adaptive,jordan1994hierarchical} combine multiple expert predictors via a gating network that assigns probabilistic weights or discrete routing decisions.
This modular structure enables specialization, scalability, and sparse activation in large architectures such as the Switch Transformer~\cite{fedus2022switch}, where only a small subset of experts is active per input. Despite their practical success, the theoretical understanding of MoE systems---particularly generalization under finite communication resources---remains limited.
Classical analyses~\cite{azran2004data} gave Rademacher-based bounds that additively depend on the gate complexity and the sum of expert complexities, scaling linearly with the number of experts. More recently, Akretche~\emph{et al.}~\cite{akretche2024tighter} introduced local differential privacy (LDP) regularization on the gate, obtaining tighter PAC-Bayesian and Rademacher bounds with logarithmic dependence on the number of experts. However, even in this setting, the gate is treated as a stochastic selector rather than an information-constrained communication process.

The proposed communication-generalization framework for MoE, beyond its theoretical significance in distributed learning, has a natural interpretation in aeronautical and aerospace communication systems~\cite{Toso2022_AeroDistributed, Li2023_UAVSurvey,11271575}. Modern aircraft, satellites, and unmanned aerial vehicles (UAVs) increasingly rely on distributed sensing and computation, where multiple onboard or remote modules act as \emph{experts} processing heterogeneous sensor streams under stringent bandwidth, latency, and reliability constraints.
The MoE gate parallels the routing logic in such systems---deciding which local estimator or control unit should communicate with the flight computer or ground segment. In this context, the mutual-information rate constraints derived here quantify the performance degradation (in estimation, navigation, or control) induced by limited communication capacity, extending \emph{data-rate theorems}~\cite{Nair2013_DataRate, Martins2016_CommControl} to learned, data-driven aerospace decision systems.

\textbf{Novelty and Contribution:}
From a communication-theoretic perspective, this work develops an explicit \emph{communication-generalization trade-off} for MoE models by reinterpreting the gating mechanism as a finite-capacity stochastic channel. We quantify the gating pathway by its mutual information rate $R_g=I(X; T)$, which limits how much information about the input can reach the expert bank and thus controls expressivity and statistical robustness. Building on this view, we introduce a rate-distortion formulation of the gating problem and show how the best achievable prediction risk at a given gating rate is characterized by a rate-distortion function $D(R_g)$, leading to the bound $\mathbb{E}[R(W)] \le D(R_g)+\delta_m+\sqrt{(2/m)\, I(S; W)}$, where $R_g$ enters through $D(R_g)$. We further consider the practically relevant case where gate decisions are conveyed over a physical link of capacity $C$; when the gate is trained/regularized to operate near this budget (so that $I(X;T)\approx C$), the bound specializes to $D(C)$, making the dependence on $C$ explicit, and connects MoE gating to classical notions such as channel capacity and data-rate limitations.

While information-theoretic generalization theory~\cite{xu2017information,bu2020tightening}, communication-limited learning~\cite{polyanskiy2022info_contraction,shamir2022communication}, and recent MoE risk bounds~\cite{akretche2024tighter} are well established, to the best of our knowledge, while several works interpret specialization and routing through information-theoretic lenses, we are not aware of prior MoE risk bounds in which the gate is modeled explicitly as a rate-limited channel and the achievable risk is expressed with the gating rate $I(X;T)$ appearing as an explicit design parameter through a rate--distortion function. Related information-theoretic perspectives on specialization and hierarchical decision systems have also been studied,
e.g., via mutual-information principles and online learning in hierarchical architectures~\cite{hihn2019information, hihn2023hierarchically, hihn2024online}.
Our contribution is to place MoE architectures within a unified rate-distortion and capacity framework, where the gating rate $I(X; T)$ acts as a system-level communication constraint (which, in practice, is upper-bounded by the available link capacity) shaping both expressivity and generalization, and where privacy or compression mechanisms can be interpreted as additional noisy-channel layers. Theoretical results are supported by numerical simulations on multi-expert MoE models and on a binary symmetric \emph{one-bit gating} scenario, which empirically illustrate the predicted trade-offs between gating rate, sample size, and generalization performance.

\textbf{Organization}:
Section~\ref{sec:moe-comm} formalizes MoE models as stochastic communication systems. Section~\ref{sec:info-bound} applies information-theoretic generalization bounds to MoE gating. Section~\ref{sec:rd-section} introduces a rate-distortion formulation of the gating mechanism and establishes a rate-distortion model combined with an information-theoretic generalization term. Section~\ref{sec:sims} presents numerical simulations that empirically validate the theoretical trade-offs, and Section~\ref{sec:conclusion} concludes with perspectives for communication-aware MoE design.

\section{MoE as a Communication System}
\label{sec:moe-comm}


\textbf{System Model:} Let $(X,Y)\sim \mathcal{D}$ denote a random input--output pair, where $X\in\mathcal X\subseteq\mathbb R^d$ is a feature vector and $Y\in\mathcal Y$ is the corresponding label. We consider a Mixture-of-Experts (MoE) model with $n$ experts indexed by $[n]\triangleq\{1,\dots,n\}$. Expert $g\in[n]$ is a predictor $h_g(\cdot;W_g):\mathcal X\to \widehat{\mathcal Y}$ parameterized by expert parameters $W_g$ (e.g., $\widehat{\mathcal Y}=\{0,1\}$ for classification or $\widehat{\mathcal Y}=\mathbb R$ for regression). We collect all expert parameters into the \emph{expert bank}
$W_{\text{exp}}\triangleq (W_1,\dots,W_n)$.

The gating network is parameterized by $W_{\text{gate}}$ and maps an input $x$ to a probability vector
$g_{W_{\text{gate}}}(x)=(g_{W_{\text{gate}},1}(x),\dots,g_{W_{\text{gate}},n}(x))\in \Delta_n$,
where $\Delta_n\triangleq\{p\in\mathbb R_+^n:\sum_{i=1}^n p_i=1\}$ and $g_{W_{\text{gate}},g}(x)$ denotes the probability of routing $x$ to expert $g$. Given $X=x$, the routing random variable $T\in[n]$ is sampled according to
$\mathbb P(T=g\mid X=x;W_{\text{gate}})=g_{W_{\text{gate}},g}(x)$ for $g\in[n]$. Conditioned on $T=g$, the model outputs
$\hat{Y}=h_T(X;W_T)$, where $W_T$ denotes the parameters of the selected expert, i.e., $W_T=W_g$ when $T=g$. We denote the overall (possibly randomized) model parameters by
$W\triangleq (W_{\text{gate}},W_{\text{exp}})$. Unless stated otherwise, routing is applied \emph{per sample} and is conditionally memoryless: given $W$ and inputs $\{X_j\}_{j=1}^m$, the routing variables $\{T_j\}_{j=1}^m$ are conditionally independent with $P(T_j\mid X_j,W)=P_{W_{\text{gate}}}(T\mid X_j)$ for each $j$. Under this modeling, the gate defines a discrete memoryless channel $X\to T$ with channel law $P_{W_{\text{gate}}}(T\mid X)$, whose output $T$ selects one expert from the bank (see Fig.~\ref{fig:system_model_ultra}). Note that we denote the data-generating distribution by $\mathcal D$ (not to be confused with the rate--distortion function $D(\cdot)$ in Section~\ref{sec:rd-section}). Also throughout, uppercase symbols denote random variables (e.g., $X,Y,T,\hat{Y},W$), while lowercase letters denote realizations (e.g., $x,y,t$) and deterministic indices (e.g., $g\in[n]$).

\textbf{Learning Setup and Risk}: Let $S=\{(x_j,y_j)\}_{j=1}^m\sim \mathcal{D}^m$ denote a training sample of $m$ i.i.d.\ input--output pairs, and let $W=({W_{\text{gate}}}, W_{\text{exp}})$ be the (possibly randomized) parameters produced by a learning algorithm trained on $S$. For a bounded loss function $\ell:\mathcal Y\times\mathcal Y\to[0,1]$, the population and empirical risks are defined as
\begin{align}
\label{eq-rate-11}
R(W)
&=\mathbb E_{(X,Y)\sim \mathcal{D}}
\mathbb E_{T\sim g_{W_{\text{gate}}}(X)}
\big[\ell(h_T(X;W_{T}),Y)\big],\\[1mm]
R_S(W)
&=\frac{1}{m}\sum_{j=1}^m \mathbb E_{T\sim g_{W_{\text{gate}}}(x_j)}
\big[\ell(h_T(x_j;W_T),y_j)\big].
\end{align}
Here $T\sim P_{W_{\text{gate}}}(\cdot\mid X)$ denotes the stochastic gate output.

\textbf{Communication-Constrained Learning Objective} From a communication-theoretic perspective, the gating mechanism constitutes the only pathway through which information about the input $X$ is conveyed to the expert bank. We quantify this pathway by the \emph{gating information rate}, i.e.,
$R_g \triangleq I(X;T)$, which measures how much information about $X$ can be transmitted through the
routing decision. In practice, $R_g$ may be limited either implicitly (e.g., by regularization or noise) or explicitly by a physical communication link of finite capacity. The objective of interest is to minimize the population risk subject to a constraint on the gating information rate:
\begin{align}
\min_W \; R(W)
\quad \text{subject to} \quad I(X;T) \le \bar{R}_g\ ,
\end{align}
or, equivalently, under an exogenous capacity constraint $R_g\le C$. This formulation parallels classical communication problems in which expected distortion is minimized subject to a rate constraint, with prediction loss playing the role of distortion\footnote{{Throughout, we use the term \emph{gating rate} for the achieved mutual information $R_g=I(X; T)$, and we reserve the term \emph{capacity} for an external upper bound on this rate (e.g., the Shannon capacity of a physical link).}} {This is the standard communication objective of minimizing an application-level distortion (here, prediction loss) subject to an information-rate constraint on the message $T$. Section~\ref{sec:rd-section} formalizes this via a rate--distortion function.}

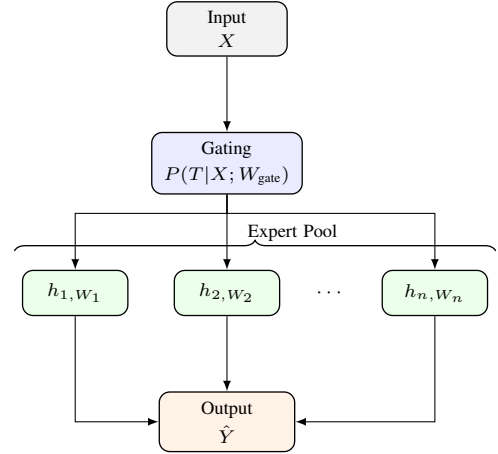
\begin{figure}[!t]
\centering
\scalebox{1}{
\begin{tikzpicture}[>=latex, font=\scriptsize, node distance=0.9cm, line width=0.5pt]

\node[draw, rounded corners, fill=gray!10, minimum width=1.6cm, minimum height=0.6cm] 
    (input) {\shortstack{Input\\$X$}};
\node[draw, rounded corners, fill=blue!8, below=1.0cm of input, minimum width=2.0cm, minimum height=0.6cm]
    (gating) {\shortstack{Gating\\$P(T|X;W_{\text{gate}})$}};

\node[draw, rounded corners, fill=green!8, below=1.0cm of gating, minimum width=1.4cm, minimum height=0.6cm]
    (expert2) {$h_{2,W_2}$};
\node[draw, rounded corners, fill=green!8, left=0.6cm of expert2, minimum width=1.4cm, minimum height=0.6cm]
    (expert1) {$h_{1,W_1}$};

\node[right=0.35cm of expert2] (dots) {$\cdots$};
\node[draw, rounded corners, fill=green!8, right=0.35cm of dots, minimum width=1.4cm, minimum height=0.6cm]
    (expertN) {$h_{n,W_n}$};

\node[draw, rounded corners, fill=orange!10, below=1.0cm of expert2, minimum width=1.8cm, minimum height=0.6cm]
    (output) {\shortstack{Output\\$\hat{Y}$}};

\draw[->, thin] (input) -- (gating);
\draw[->, thin] (gating.south) -- ++(0,-0.25) -| (expert1.north);
\draw[->, thin] (gating.south) -- (expert2.north);
\draw[->, thin] (gating.south) -- ++(0,-0.25) -| (expertN.north);
\draw[->, thin] (expert1.south) |- (output.west);
\draw[->, thin] (expert2.south) -- (output);
\draw[->, thin] (expertN.south) |- (output.east);

\draw [decorate,decoration={brace,amplitude=4pt}]
      ([xshift=-0.1cm,yshift=0.25cm]expert1.north west) -- 
      ([xshift=0.1cm,yshift=0.25cm]expertN.north east)
      coordinate[midway] (bracepos);

\node[xshift=0.5cm, above=0.01cm of bracepos] {\scriptsize Expert Pool};

\end{tikzpicture}}
\caption{MoE system viewed as a finite-rate stochastic communication link. The input feature vector $X$ is processed by a gating module implementing a channel $P(T \mid X; W_{\text{gate}})$ that maps $X$ to a routing index $T\in[n]$ under an information-rate constraint. The index selects an expert ${h_g(\cdot;W_g)}_{g=1}^n$, yielding the prediction $\hat{Y}=h_T(X;W_T)$. Here $I(X;T)$ is the effective communication rate to the expert bank, so the gate acts as a constrained link that limits how much information about $X$ reaches the experts, shaping expressivity and generalization.}

\label{fig:system_model_ultra}
\end{figure}
\section{Information-Theoretic Bound and Proof}
\label{sec:info-bound}

We adopt the framework of information-theoretic learning theory~\cite{xu2017information,bu2020tightening}, {our contribution is the communication interpretation of the gating randomness and its integration with the rate-distortion. In particular,  The following is a direct specialization of the mutual-information generalization bound of Xu and Raginsky to the MoE prediction rule with internal stochastic gating.} Let $I(S; W)$ denote the mutual information between the training sample and the learned parameters. This quantity measures how much the learned model depends on the data and therefore controls its expected generalization gap.

\begin{theo}[{Xu--Raginsky bound specialized to MoE}]
\label{th:itmoe}
Assume $\ell \in [0,1]$. Let $W$ be the (possibly randomized) parameters produced by a learning algorithm trained on sample $S=\{(x_j,y_j)\}_{j=1}^m\sim \mathcal{D}^m$. Then, for any sample size $m \ge 1$, the expected generalization gap satisfies
\begin{align}
\label{eq:itmoe}
\big|\mathbb{E}[R(W)] - \mathbb{E}[R_S(W)]\big|
  \le  
\sqrt{\frac{2}{m}\, I(S;W)} ,
\end{align}
where $I(S; W)$ in (\ref{eq:itmoe}) is the mutual information between the training sample and the learned parameters\footnote{Unless stated otherwise, all mutual information and entropy quantities are measured in nats, i.e., logarithms are natural ($\ln$). References to \emph{bits} are purely informal; using base-2 logarithms would simply rescale these quantities by a factor of $\ln 2$.}.
\end{theo}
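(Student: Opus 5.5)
The plan is to reduce the statement to the standard Xu--Raginsky argument. The gating randomness is absorbed into a per-sample loss that depends only on $W$ and $Z=(X,Y)$. Concretely, I will define
\[
\bar\ell(w,z)\triangleq \mathbb{E}_{T\sim g_{w_{\text{gate}}}(x)}\big[\ell(h_T(x;w_T),y)\big],\qquad z=(x,y).
\]
Because $\ell\in[0,1]$, this averaged loss also lies in $[0,1]$. With this notation, $R(w)=\mathbb{E}_{Z\sim\mathcal D}[\bar\ell(w,Z)]$ and $R_S(w)=\frac1m\sum_j \bar\ell(w,z_j)$. The MoE risks are then exactly the population and empirical risks of an ordinary learning problem with loss $\bar\ell$. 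The internal routing variables $T_j$ are integrated out per sample by the conditionally memoryless assumption, so they never enter $I(S;W)$.

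\textbf{Step 1 (sub-Gaussianity).} For each fixed $w$, $\bar\ell(w,Z)\in[0,1]$. By Hoeffding's lemma it is $\tfrac12$-sub-Gaussian. Averaging $m$ i.i.d.\ samples then makes $R_S(w)$, viewed as a function of $S\sim\mathcal D^m$, $\tfrac{1}{2\sqrt m}$-sub-Gaussian, with variance proxy $1/(4m)$, under the product measure $P_S$.

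\textbf{Step 2 (decoupling).} Let $\bar S,\bar W$ be independent copies with $P_{\bar S}\otimes P_{\bar W}=P_S\otimes P_W$. By independence, $\mathbb{E}[R_{\bar S}(\bar W)]=\mathbb{E}[R(W)]$. Hence the gap equals $\mathbb{E}_{P_{SW}}[f]-\mathbb{E}_{P_S\otimes P_W}[f]$ with $f(s,w)=R_s(w)$. Note that $f$ is sub-Gaussian under $P_S\otimes P_W$, since it is sub-Gaussian in $s$ for every fixed $w$. I will then apply the Donsker--Varadhan variational formula,
\[
D(P_{SW}\|P_S\otimes P_W)\ge \lambda\big(\mathbb{E}_{P_{SW}}f-\mathbb{E}_{P_S\otimes P_W}f\big)-\frac{\lambda^2}{8m}\quad\forall\lambda\in\mathbb R.
\]
Optimizing over $\lambda$ of both signs gives $|\text{gap}|\le\sqrt{I(S;W)/(2m)}$, which is stronger than the displayed $\sqrt{(2/m)I(S;W)}$; the displayed form corresponds to the looser variance proxy $1/m$. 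Either way, the stated inequality \eqref{eq:itmoe} follows.

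\textbf{Main obstacle.} The only delicate point is Step 0: making sure the stochastic gate does not leak additional dependence that the bound fails to capture. I will handle it by checking that $T$ is drawn fresh, conditionally on $(X,W)$ only, both in $R$ and in $R_S$. The expectation over $T$ is taken inside the definition of both risks, so $\bar\ell$ is a measurable deterministic function of $(w,z)$. The rest is the routine Xu--Raginsky computation. If $I(S;W)=\infty$ the statement is vacuous, so I can assume it is finite throughout.
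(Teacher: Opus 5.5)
\textbf{The Donsker--Varadhan step in Step 2 is not justified as written.} Your Step 0 is fine; it needs only linearity of expectation, since both risks already average over $T$ sample by sample. The inequality you display in Step 2 is also true. The problem is the reason you give for it.

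Donsker--Varadhan with $Q=P_S\otimes P_W$ and test function $\lambda f$ requires $\log\mathbb{E}_Q e^{\lambda(f-\mathbb{E}_Q f)}\le\lambda^2/(8m)$. This does not follow from sub-Gaussianity of $s\mapsto R_s(w)$ for each fixed $w$. Under $Q$,
\[
f(\bar S,\bar W)-\mathbb{E}_Q f=\big[R_{\bar S}(\bar W)-R(\bar W)\big]+\big[R(\bar W)-\mathbb{E}\,R(\bar W)\big],
\]
and the second bracket is a fluctuation of the population risk across parameters that does not shrink with $m$.

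For example, take two parameter values whose losses are identically $0$ and identically $1$ (an always-correct and an always-wrong predictor of a deterministic label), with $P_W$ uniform. Every $f(\cdot,w)$ is constant, yet $f(\bar S,\bar W)$ is Bernoulli$(1/2)$ under $Q$ with variance $1/4$. So no variance proxy below $1/4$ is possible: not $1/(4m)$, and not the ``looser'' $1/m$ once $m>4$. In general nothing better than the trivial proxy $1/4$ (from $f\in[0,1]$) can be certified, which destroys the $1/\sqrt m$ rate. Your ``either way'' remark therefore does not rescue the step. This, not Step 0, is the delicate point.

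\textbf{How to repair it.} Use the centered function $\tilde f(s,w)\triangleq R_s(w)-R(w)$. Since $R(w)$ depends only on $w$, $\mathbb{E}_{P_{SW}}[R(W)]=\mathbb{E}_Q[R(\bar W)]$. Hence the gap is still $\mathbb{E}_{P_{SW}}\tilde f-\mathbb{E}_Q\tilde f$, now with $\mathbb{E}_Q\tilde f=0$. Conditioning on $\bar W$ and using Step 1 gives
\[
\log\mathbb{E}_Q e^{\lambda\tilde f}=\log\mathbb{E}_{\bar W}\mathbb{E}_{\bar S}\, e^{\lambda(R_{\bar S}(\bar W)-R(\bar W))}\le\lambda^2/(8m),
\]
which is exactly what your Donsker--Varadhan display needs. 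Equivalently, apply Donsker--Varadhan to $P_{S|W=w}$ versus $P_S$ for each $w$, then average using $I(S;W)=\mathbb{E}_W D(P_{S|W}\|P_S)$ and Jensen. With this change your argument is complete and gives the sharper $\sqrt{I(S;W)/(2m)}$.

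\textbf{Comparison with the paper.} The paper argues differently. It defines the loss $L$ at an independent test point, uses the chain $S\to W\to L$ and data processing to get $I(S;L)\le I(S;W)$, and then cites a Xu--Raginsky bound in terms of $I(S;L)$. Your repaired route is the standard Xu--Raginsky argument and is the more reliable one. A bound via $I(S;L)$ for a fresh-test-point loss is not what Xu and Raginsky prove, and it fails in general. Consider a trivially gated learner that stores $S$, returns the stored label on training inputs and $0$ elsewhere, with continuous $X$ and $Y$ a fair coin independent of $X$. It has $I(S;L)=0$ but gap $1/2$. So you lose nothing by not following the paper at that step.
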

{The proof is given in Appendix~\ref{proofTheorem1}.}

\begin{remark}[Communication interpretation]
\label{rem-com-int}
The bound in Theorem~\ref{th:itmoe} quantifies the \emph{estimation} or learning-algorithm component of generalization through $I(S; W)$.
Separately, the gating mechanism defines a communication channel $X\to T$ with rate $I(X; T)$.
Constraining this rate---e.g., via local differential privacy (LDP), where each input is first randomized by a local privacy mechanism, or via a rate-distortion objective---limits how much information the gate can transmit from the input to the experts, thereby controlling the model capacity. Thus, generalization is governed by $I(S;W)$, while expressivity and communication dependence are governed by $I(X; T)$. The novelty of our framework is that, \emph{although the mutual-information bound itself follows classical results, interpreting the gating mechanism as a finite-rate communication channel and explicitly linking its communication rate $I(X; T)$ to model expressivity and generalization has not appeared in prior analyses of MoE.}
\end{remark}

\begin{remark}[{On estimating $I(X;T)$ and $I(S;W)$ in practice}]
{
For discrete routing $T\in[n]$ with gate probabilities $g_{W_{\text{gate}}}$, the gating rate admits the computable form
\begin{align*}
I(X;T)=\mathbb E\Big[\log \frac{g_{W_{\text{gate}},T}(X)}{\pi_T}\Big], \pi_g\triangleq \mathbb E[g_{W_{\text{gate}},g}(X)],\ g\in[n]
\end{align*}
which can be approximated by empirical averages over a held-out dataset or minibatches.
For high-dimensional learned parameters, $I(S;W)$ is generally intractable; in practice one may upper bound or proxy it via PAC-Bayesian/compression-based estimates or variational mutual-information bounds, using it as a \emph{design heuristic} rather than an exactly computed quantity \cite{belghazi2018mine}.
}
\end{remark}

\section{Rate-Distortion Formulation of Gating}
\label{sec:rd-section}
{The gating mechanism can also be interpreted through a \emph{rate-distortion} lens. It must encode each input $X$ into a discrete message of limited communication
rate that still yields low prediction loss. In the MoE setting, the rate-distortion index $T$ corresponds to the routing variable $T$ produced by the gating mechanism.}

\subsection{Rate-Distortion Optimization}

The gating design problem can be posed as:
\begin{align}
\label{eq:RDproblem}
&\min_{P(T|X)} \mathbb{E}\!\big[\ell(h_T(X;W_T),Y)\big],
\\ &\text{s.t. } I(X;T)\le R_g, \nonumber
\end{align}
where the expectation is over an independent $(X,Y)\sim \mathcal{D}$ and $T|X\sim P(\cdot|X)$. {Classical Shannon rate--distortion theory is an asymptotic statement: it characterizes the minimum achievable expected distortion when encoding long blocks of i.i.d.\ source samples with blocklength $L\to\infty$ under an average rate constraint. 
In our setting, we assume $(X_j,Y_j)$ are i.i.d.\ and the gate is memoryless across samples conditioned on $(X,W)$; thus $D(R_g)$ is interpreted as the \emph{single-letter} distortion--rate function induced by the realized expert bank $W_{\text{exp}}$ through the per-sample selection $T$ and the incurred loss $\ell(h_T(X;W_T),Y)$; the term $\delta_m$ captures finite-sample/finite-optimization effects when training a rate-regularized gate from data.}
The above expresses the fundamental trade-off between communication rate and predictive performance. Equivalently, by introducing a Lagrange multiplier $\lambda>0$, we can minimize
\begin{align}
\label{eq:lag}
\mathcal{L}(P(T|X))=\mathbb{E}\!\big[\ell(h_T(X;W_T),Y)\big] + \lambda\,I(X;T).
\end{align}
{The constrained problem (\ref{eq:RDproblem}) and its Lagrangian form (\ref{eq:lag}) can be solved by classical rate--distortion iterations (e.g., Blahut-Arimoto) in settings where the distributions are tractable~\cite{blahut1972hypothesis}.}

\subsection{Rate-Distortion Function and Generalization Bound}

We define the rate--distortion function
\begin{align}
\label{eq:DR}
D(R_g)\triangleq \inf_{I(X;T)\le R_g}\, \mathbb{E}\!\big[\ell(h_T(X;W_T),Y)\big].
\end{align}
When $W_{\mathrm{exp}}$ is learned from $S$, $D_{W_{\mathrm{exp}}}(R_g)$ is a random quantity; throughout, $D(R_g)$ should be interpreted conditionally on the realized bank $W_{\mathrm{exp}}$.
As in classical rate--distortion theory, $D(R)$ is non-increasing in $R$. Throughout, we view $D(R_g)$ as induced by a fixed expert bank $W_{\text{exp}}$ (i.e., $D(R_g)\equiv D_{W_{\text{exp}}}(R_g)$), and the infimum ranges over all conditional laws $P(T|X)$ satisfying the rate constraint. Thus, $D(R_g)$ is the best achievable prediction loss under gating rate $R_g$. The following theorem combines rate--distortion and generalization principles.

\begin{theo}[Rate-Distortion-Generalization Bound]
\label{th:rdmoe}
Let $W=({W_{\text{gate}}}, W_{\text{exp}})$ denote the parameters produced by a learning algorithm trained on a sample $S\sim {\mathcal{D}^m}$, and let $T$ denote the corresponding (possibly stochastic) gating output. Define the effective gating rate as $R_g:= I(X; T)$. Assume moreover that the expected empirical risk satisfies
\begin{align}
\label{eq-expext-r-s}
\mathbb{E}[R_S(W)] \le D(R_g) + \delta_m\ ,
\end{align}
for some nonnegative error term $\delta_m \ge 0$ (capturing optimization and empirical-to-population mismatch for the rate-distortion objective at rate $R_g$). Such a condition is satisfied, for example, when the gating parameters ${W_{\text{gate}}}$ are obtained (for the realized expert bank $W_{\text{exp}}$) by approximately minimizing the empirical rate-distortion Lagrangian
\begin{align}
\label{eq:empRD}
\widehat{\mathcal L}_S(W_{\text{gate}};W_{\text{exp}})
\triangleq R_S(W_{\text{gate}},W_{\text{exp}})+\lambda\, I(X;T),
\end{align}
for some $\lambda>0$, or equivalently by solving the constrained empirical problem in \eqref{eq:RDproblem}. In this case, $\delta_m$ captures the optimization sub-optimality and the empirical-to-population mismatch of the rate-distortion objective at rate $R_g$. Then the expected population risk of the learned MoE satisfies
\begin{align}
\label{eq:RDgen}
\mathbb{E}[R(W)]\le D(R_g)+\delta_m+\sqrt{\frac{2}{m}\,I(S;W)}\ .
\end{align}
\end{theo}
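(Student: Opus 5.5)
The bound \eqref{eq:RDgen} follows by chaining Theorem~\ref{th:itmoe} with the empirical rate--distortion assumption \eqref{eq-expext-r-s}. We split the expected population risk into an expected empirical risk and a generalization gap:
\begin{align*}
\mathbb{E}[R(W)] = \mathbb{E}[R_S(W)] + \big(\mathbb{E}[R(W)]-\mathbb{E}[R_S(W)]\big).
\end{align*}
The second term is at most its absolute value. By Theorem~\ref{th:itmoe}, which applies because $\ell\in[0,1]$ and $W$ is produced by a (possibly randomized) algorithm from $S\sim\mathcal D^m$, it is bounded by $\sqrt{(2/m)I(S;W)}$. The first term is bounded by $D(R_g)+\delta_m$ via hypothesis \eqref{eq-expext-r-s}. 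Adding the two bounds gives \eqref{eq:RDgen}.

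Before writing this out, I will make precise in what sense the quantities are taken.

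\textbf{Gate randomness.} Theorem~\ref{th:itmoe} is applied to the full parameter $W=(W_{\text{gate}},W_{\text{exp}})$. Both risks $R(W)$ and $R_S(W)$ already average over the internal gate randomness $T$. Hence the loss seen by the Xu--Raginsky argument is $\bar\ell(W,(x,y))=\mathbb{E}_{T\sim g_{W_{\text{gate}}}(x)}[\ell(h_T(x;W_T),y)]\in[0,1]$, which is still bounded and therefore $\tfrac12$-sub-Gaussian. No extra information term involving $T$ appears.

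\textbf{Randomness of $D(R_g)$.} Both $D(R_g)\equiv D_{W_{\text{exp}}}(R_g)$ and $R_g=I(X;T)$ depend on the realized $W$, so they are random. Following the statement, I read \eqref{eq-expext-r-s}, and hence \eqref{eq:RDgen}, with the right-hand side understood consistently: either conditionally on the realized bank, or with $D(R_g)$ replaced by its expectation. The chain of inequalities is identical in both readings, since expectation is linear and monotone.

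\textbf{The sufficient condition.} The statement also claims that approximately minimizing \eqref{eq:empRD} ensures \eqref{eq-expext-r-s}. I will justify this only at the level the statement asserts, by unpacking the definition of $\delta_m$. Consider any feasible gate $P(T|X)$ with $I(X;T)\le R_g$ whose population loss is within $\epsilon$ of the infimum $D(R_g)$. Its empirical loss deviates from its population loss by a term that is absorbed into $\delta_m$, as is the optimization error of the learned gate relative to that comparator. Writing out the resulting sum of errors shows $\mathbb{E}[R_S(W)]\le D(R_g)+\delta_m$. This is essentially a definition, so there is no substantive difficulty.

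\textbf{Main obstacle.} The genuine subtlety is the previous two points: making sure the expectation in \eqref{eq-expext-r-s} is taken over the same joint law of $(S,W,T)$ as in Theorem~\ref{th:itmoe}, and that the random $D_{W_{\text{exp}}}(R_g)$ is handled consistently on both sides. Once the conventions above are fixed, the proof is a one-line triangle-inequality combination.
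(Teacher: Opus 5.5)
Your proposal is correct and follows the paper's proof. Write $\mathbb{E}[R(W)]\le\mathbb{E}[R_S(W)]+\big|\mathbb{E}[R(W)]-\mathbb{E}[R_S(W)]\big|$, then bound the first term by hypothesis \eqref{eq-expext-r-s} and the second by Theorem~\ref{th:itmoe}. The only difference is how you justify the gap step: you apply Xu--Raginsky to the gate-averaged loss $\bar\ell(W,(x,y))\in[0,1]$, which is the sound route, whereas the paper's appendix passes through $I(S;L)$ for a single independent test loss $L$ plus data processing. That intermediate inequality is false in general (a memorizing expert can make $L$ almost surely constant, so $I(S;L)=0$, while the gap equals $1$), even though the final $I(S;W)$ bound the paper reaches is valid.
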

{The proof is given in Appendix~\ref{proofofTheorem2}. 
It combines \eqref{eq-expext-r-s} with Theorem~1.}

\begin{remark}[Generalization trade-off in MoE under local privacy]
Theorem~\ref{th:itmoe} provides an explicit \emph{communication-generalization frontier} for MoE systems. Suppose that the gating mechanism is designed through an $\epsilon$-LDP channel, meaning that each input $X$ is passed through a randomized mechanism $Q(\cdot|X)$ satisfying the per-sample privacy condition $Q(z\mid x) \le e^{\epsilon} Q(z\mid x')$ for all $x,x',z$. For such locally private channels, standard information bounds (e.g.,~\cite{duchi2013local}) imply that there exists a constant $\zeta>0$ (in nats) such that $I(X;T)\le \zeta,\epsilon^2$. Therefore, $\epsilon$-LDP directly limits the gating information rate $R_g=I(X;T)$ and hence constrains the achievable rate-distortion performance through $D(R_g)$. In particular, if $R_g \le \zeta\epsilon^2$, then by Theorem~\ref{th:rdmoe} we obtain $\mathbb{E}[R(W)] \le D(\zeta\epsilon^2)+\delta_m+\sqrt{\frac{2}{m}I(S;W)}$. Any further reduction of $I(S;W)$ due to privacy mechanisms depends on the learning algorithm and is not asserted here.
\end{remark}

\begin{remark}[Capacity-limited MoE gating]
\label{eq-rem-cap-moe}
Suppose that the gating decisions $T$ are conveyed over a physical communication link with per-sample Shannon capacity $C$ nats, so that the achieved gating rate
\begin{align*}
    R_g \triangleq I(X; T) \le C\ .
\end{align*}
Since $D(\cdot)$ is non-increasing, $R_g\le C$ implies
\begin{align*}
D(R_g)\ge D(C).
\end{align*}
Therefore, Theorem~\ref{th:rdmoe} holds as stated with the achieved rate $R_g$:
\begin{align}
\mathbb{E}[R(W)]   \le   D(R_g) +\delta_m+ \sqrt{\frac{2}{m}I(S;W)}\ , \:\: R_g\le C\  .
\end{align}
{In particular, $R_g=I(X;T)$ is the fundamental bottleneck: any privacy-preserving randomization that reduces mutual information necessarily increases the minimum achievable distortion through $D(R_g)$.}
Moreover, under a near-saturation design in which $I(X;T) \approx C$ (and \eqref{eq-expext-r-s} holds at rate of $C$), we have $D(R_g)\approx D(C)$, so the bound effectively specializes to
\begin{align}
\mathbb{E}[R(W)]   \lesssim   D(C) +\delta_m+ \sqrt{\frac{2}{m}I(S;W)}\ .
\end{align}
\end{remark}

\section{Numerical Simulations}
\label{sec:sims}
We next illustrate the applicability of Theorems~\ref{th:itmoe} and~\ref{th:rdmoe} on two synthetic experiments. In the first experiment (Fig.~\ref{fig:thm1-sim}), we consider a binary classification task generated by a {true} MoE model with real-valued Gaussian features. The true model has $d=3$ input dimensions and $10$ experts. A bank of $30$ candidate MoE models is drawn i.i.d.\ from the same prior distribution. For each dataset of size $m=5$, we compute the empirical $0$--
$1$ risk of every candidate model and use an $\alpha$-mixture learning algorithm: with probability $1-\alpha$ the learner samples a model uniformly at random, and with probability $\alpha$ it selects the empirical-risk minimizer. By sweeping $\alpha$ from $0$ to $1$, we obtain a family of learning algorithms with increasing dependence on the sample and hence increasing mutual information $I(S; W)$, estimated via $I(S; W)=H(W)-H(W|S)$ from the posterior $q(W|S)$. For each $\alpha$ we estimate the generalization gap $\lvert \mathbb{E}[R(W)-R_S(W)]\rvert$ by averaging over $100$ datasets and a separate test set of size $1000$.

Fig.~\ref{fig:thm1-sim} plots the empirical generalization gap as a function of the information term $\sqrt{2I(S;W)/m}$, together with the theoretical line $y=\sqrt{2I(S;W)/m}$. The points form an increasing curve that stays strictly below the diagonal, showing that the mutual-information bound of Theorem~\ref{th:itmoe} is respected and captures the growth of the generalization gap as the learner becomes more data-dependent.

\begin{figure}[t]
    \centering
    \includegraphics[width=0.95\linewidth]{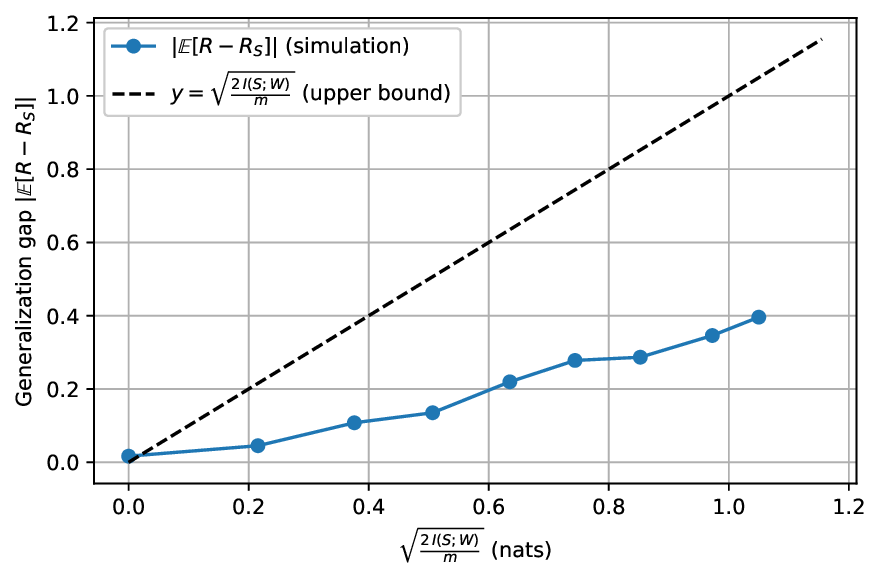}
    \caption{MoE simulation for Theorem~\ref{th:itmoe}. The empirical generalization gap $\lvert\mathbb{E}[R-R_S]\rvert$ increases with the information term $\sqrt{2 I(S; W)/m}$ and remains below the theoretical upper bound, illustrating the info-generalization trade-off at the algorithm level.}
    \label{fig:thm1-sim}
\end{figure}

In the second experiment (Fig.~\ref{fig:thm2-sim}), we use a binary symmetric channel (BSC) setting that makes the rate-distortion structure explicit. We let $X\sim\mathrm{Bernoulli}(1/2)$ and $Y=X\oplus Z$, where $Z\sim\mathrm{Bernoulli}(p_{\text{true}})$ models noise. A finite candidate set $\{p_r\}$ of crossover probabilities is fixed, and for each dataset of size $m=200$, we compute the log-likelihood of the observed sample under every candidate BSC$(p_r)$, form a tempered posterior over indices, and sample a single channel index $W$. The resulting model has crossover probability $p_W$, so its population risk is $R(W)=p_W$, and its gating rate is $R_g= I(X; T)=\ln 2-h(p_W)$ in nats, where $h(\cdot)$ denotes  the binary entropy function (in nats). This toy channel can be viewed as a one-bit gating scenario, where the crossover probability $p_W$ measures how accurately the gate routes $X$ to the appropriate expert, and the rate $R_g=\ln 2 - h(p_W)$ represents the effective communication rate of this binary gating path. For each $p_{\text{true}}$ in a grid, we run $400$ Monte Carlo datasets, estimate $I(S; W)$ via entropies of the posterior, and evaluate the bound from Theorem~\ref{th:rdmoe} as
\begin{align}
\label{eq-eval-bound}
D(R_g) + \sqrt{({2}/{m})I(S;W)}\ ,
\end{align}
where $D(R_g)$ is the BSC rate-distortion function in nats \footnote{Note that, in this BSC setting, $D(R_g)$ is available in closed form and the rate term is evaluated analytically, so $\delta_m$ is negligible in the plotted bound.}.

Fig.~\ref{fig:thm2-sim} shows the analytical rate-distortion curve $D(R_g)$, the empirical mean population risks $\mathbb{E}[R(W)]$ obtained from the simulation, and the corresponding upper-bound values of Theorem~\ref{th:rdmoe}. The empirical risks track $D(R_g)$ closely across a wide range of gating rates, while all simulated points lie comfortably below the theoretical curve in (\ref{eq-eval-bound}), 
 thus illustrating the predicted rate-distortion-generalization trade-off. {To complement the synthetic simulations, we also provide in the Supplementary Material a small MNIST experiment based on a finite bank of pretrained CNN experts and a discrete selection rule, which enables empirical estimation of $I(S;W)$ and verification of the term $\sqrt{2I(S;W)/m}$.}

\begin{figure}[t]
    \centering
    \includegraphics[width=0.95\linewidth]{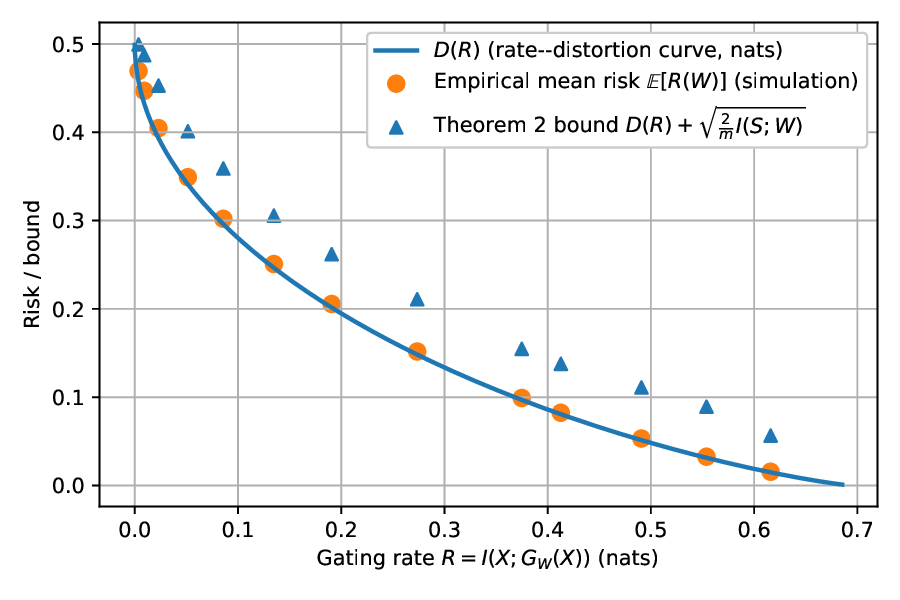}
    \caption{BSC rate-distortion-generalization experiment for Theorem~\ref{th:rdmoe}. The empirical mean population risk $\mathbb{E}[R(W)]$ (dots) closely follows the rate-distortion curve $D(R_g)$, while the bound in (\ref{eq-eval-bound}) 
    (triangles) remains safely above, confirming the theoretical trade-off between gating rate and prediction accuracy.}
    \label{fig:thm2-sim}
\end{figure}

\section{Conclusion}
\label{sec:conclusion}

This letter established an explicit information-theoretic connection between the communication rate of the gating mechanism and the generalization performance of Mixture-of-Experts models. By casting the gate as a finite-capacity stochastic channel, we derived mutual-information and rate-distortion-generalization bounds in which the gating rate $I(X; T)$ appears as an operational capacity measure shaping expressivity and statistical robustness.

From a practical standpoint, the analysis applies to communication-constrained learning settings such as federated MoE, edge or split inference, coded computing, and distributed aerospace systems, where expert routing must operate under strict bandwidth or reliability constraints~\cite{Toso2022_AeroDistributed, Li2023_UAVSurvey}.
In such settings, the gating rate $I(X; T)$ acts as an operational communication rate (upper-bounded by the available link capacity) that governs statistical efficiency and model expressivity; in aerospace and autonomous platforms, it also captures trade-offs between communication rate, inference accuracy, and control responsiveness, extending classical data-rate theorems~\cite{Nair2013_DataRate, Martins2016_CommControl} to modern learned decision systems. Future work includes extending these results to deeper and hierarchical MoE architectures, developing training procedures that explicitly regularize $I(X; T)$, and validating the theory on large-scale, bandwidth-limited deployments. While our analysis is single-letter and focuses on memoryless per-sample routing, extending it to deep/hierarchical MoE and to finite-blocklength routing over noisy physical links is nontrivial; in such regimes, $I(X;T)$ and $D(R_g)$ should be interpreted as design proxies and may be approximated via variational bounds or learned compression objectives.

\bibliographystyle{IEEEtran}
\bibliography{ref}

\begin{thebibliography}{10}
\providecommand{\url}[1]{#1}
\csname url@samestyle\endcsname
\providecommand{\newblock}{\relax}
\providecommand{\bibinfo}[2]{#2}
\providecommand{\BIBentrySTDinterwordspacing}{\spaceskip=0pt\relax}
\providecommand{\BIBentryALTinterwordstretchfactor}{4}
\providecommand{\BIBentryALTinterwordspacing}{\spaceskip=\fontdimen2\font plus
\BIBentryALTinterwordstretchfactor\fontdimen3\font minus \fontdimen4\font\relax}
\providecommand{\BIBforeignlanguage}[2]{{%
\expandafter\ifx\csname l@#1\endcsname\relax
\typeout{** WARNING: IEEEtran.bst: No hyphenation pattern has been}%
\typeout{** loaded for the language `#1'. Using the pattern for}%
\typeout{** the default language instead.}%
\else
\language=\csname l@#1\endcsname
\fi
#2}}
\providecommand{\BIBdecl}{\relax}
\BIBdecl

\bibitem{jacobs1991adaptive}
R.~A. Jacobs \emph{et~al.}, ``Adaptive mixtures of local experts,'' \emph{Neural computation}, vol.~3, no.~1, pp. 79--87, 1991.

\bibitem{jordan1994hierarchical}
M.~I. Jordan and R.~A. Jacobs, ``Hierarchical mixtures of experts and the em algorithm,'' \emph{Neural computation}, vol.~6, no.~2, pp. 181--214, 1994.

\bibitem{fedus2022switch}
W.~Fedus \emph{et~al.}, ``Switch transformers: Scaling to trillion parameter models with simple and efficient sparsity,'' \emph{J. M.L. Research}, vol.~23, no. 120, pp. 1--39, 2022.

\bibitem{azran2004data}
A.~Azran and R.~Meir, ``Data dependent risk bounds for hierarchical mixture of experts classifiers,'' in \emph{International Conference on Computational Learning Theory}.\hskip 1em plus 0.5em minus 0.4em\relax Springer, 2004, pp. 427--441.

\bibitem{akretche2024tighter}
W.~Akretche, F.~LeBlanc, and M.~Marchand, ``Tighter risk bounds for mixtures of experts,'' \emph{arXiv preprint arXiv:2410.10397}, 2024.

\bibitem{Toso2022_AeroDistributed}
F.~Toso and J.~P. How, ``Distributed estimation and control for aerospace systems: A survey of challenges and opportunities,'' \emph{IEEE Trans. on Aerospace and Electronic Systems}, vol.~58, no.~6, pp. 4879--4902, 2022.

\bibitem{Li2023_UAVSurvey}
Y.~Li \emph{et~al.}, ``A survey on intelligent unmanned aerial vehicle communications: A reinforcement learning perspective,'' \emph{IEEE Com. Surv \& Tutorials}, vol.~25, no.~1, pp. 473--510, 2023.

\bibitem{11271575}
A.~Khalesi and M.~R. Deylam~Salehi, ``Typical solutions of multi-user linearly-decomposable distributed computing,'' \emph{IEEE Networking Letters}, vol.~8, pp. 10--13, 2026.

\bibitem{Nair2013_DataRate}
G.~N. Nair and R.~J. Evans, ``Stabilizability of stochastic linear systems with finite feedback data rates,'' \emph{SIAM Journal on Control and Optimization}, vol.~43, no.~2, pp. 413--436, 2013.

\bibitem{Martins2016_CommControl}
N.~Martins and M.~Dahleh, ``Fundamental limitations of performance in the presence of limited communication,'' \emph{IEEE Trans. on Auto. Control}, vol.~61, no.~12, pp. 4111--4126, 2016.

\bibitem{xu2017information}
A.~Xu and M.~Raginsky, ``Information-theoretic analysis of generalization capability of learning algorithms,'' \emph{Advances in neural information processing systems}, vol.~30, 2017.

\bibitem{bu2020tightening}
Y.~Bu, S.~Zou, and V.~V. Veeravalli, ``Tightening mutual information-based bounds on generalization error,'' \emph{IEEE J. Sel. Areas info.}, vol.~1, no.~1, pp. 121--130, Apr. 2020.

\bibitem{polyanskiy2022info_contraction}
Y.~Polyanskiy and Y.~Wu, ``Strong data-processing inequalities and information contraction,'' \emph{Found. \& Trends in Comm. \& Info. Theory}, 2022.

\bibitem{shamir2022communication}
O.~Shamir, ``Communication-constrained learning: Fundamental limits and algorithms,'' in \emph{Conf. Learn. Theory}, 2022.

\bibitem{hihn2019information}
H.~Hihn, S.~Gottwald, and D.~A. Braun, ``An information-theoretic on-line learning principle for specialization in hierarchical decision-making systems,'' in \emph{IEEE Conf. Dec. and control (CDC)}.\hskip 1em plus 0.5em minus 0.4em\relax IEEE, 2019, pp. 3677--3684.

\bibitem{hihn2023hierarchically}
H.~Hihn and D.~A. Braun, ``Hierarchically structured task-agnostic continual learning,'' \emph{Machine Learning}, vol. 112, no.~2, pp. 655--686, 2023.

\bibitem{hihn2024online}
------, ``Online continual learning through unsupervised mutual information maximization,'' \emph{Neurocomputing}, vol. 578, p. 127422, 2024.

\bibitem{belghazi2018mine}
M.~I. Belghazi \emph{et~al.}, ``Mutual information neural estimation,'' in \emph{Proc. Int. Conf. Machine Learning (ICML)}, 2018.

\bibitem{blahut1972hypothesis}
R.~E. Blahut, \emph{An hypothesis testing approach to information theory}.\hskip 1em plus 0.5em minus 0.4em\relax Cornell University, 1972.

\bibitem{duchi2013local}
J.~C. Duchi \emph{et~al.}, ``Local privacy and statistical minimax rates,'' in \emph{Found. Comput. Science (FOCS)}.\hskip 1em plus 0.5em minus 0.4em\relax IEEE, 2013, pp. 429--438.

\end{thebibliography}
\appendices
\section{Proof of Theorem~1}\label{proofTheorem1}

\begin{proof}\label{proof-th:itmoe}
The training sample is $S=\{(X_j,Y_j)\}_{j=1}^m\sim \mathcal D^m$ (i.i.d.). A (possibly randomized)
learning algorithm takes $S$ as input and outputs model parameters
\begin{align*}
W =(W_{\text{gate}},W_{\text{exp}}),
\end{align*}
hence $W$ is a random variable whose law is induced by $(S,$ algorithmic randomness$)$.
To evaluate the population risk, we additionally draw an \emph{independent} test example
$(X,Y)\sim\mathcal D$ that is independent of $S$ and independent of the algorithmic randomness.
Given $W$ and the test input $X$, the gate samples a routing index
\begin{align*}
  T\sim g_{W_{\text{gate}}}(X)\quad\text{(i.e., } \mathbb P(T=g\mid X,W)=g_{W_{\text{gate}},g}(X)\text{)}.  
\end{align*}
All information quantities below are with respect to this joint distribution of
$$(S,W,X,Y,T).$$

Define the random loss on the test point as
\begin{align}
\label{eq:test0loss}
L \triangleq \ell\!\big(h_T(X;W_T),\,Y\big)\in[0,1].
\end{align}
The boundedness $L\in[0,1]$ holds by assumption on $\ell$.

By the definition of the MoE population risk,
\[
R(W)=\mathbb E\!\left[L\mid W\right],
\]
where the conditional expectation is over the independent test draw $(X,Y)\sim\mathcal D$ and over the
internal gate randomness $T\sim g_{W_{\text{gate}}}(X)$.
Similarly, the empirical risk is
\[
R_S(W)=\frac{1}{m}\sum_{j=1}^m \mathbb E\!\left[\ell\!\big(h_{T_j}(X_j;W_{T_j}),Y_j\big)\,\Big|\,S,W\right],
\]
where $T_j\sim g_{W_{\text{gate}}}(X_j)$ are the per-sample gate variables (conditionally i.i.d.\ given
$(S,W)$ under the memoryless-gating modeling). Taking total expectation over $(S,W)$ yields the
quantities appearing in the theorem statement:
$\mathbb E[R(W)]$ and $\mathbb E[R_S(W)]$.

We claim that, conditioned on $W$, the loss $L$ is independent of the training sample $S$:
\begin{equation}
\label{eq:condind}
P(L\mid W,S)=P(L\mid W).
\end{equation}
Indeed, given $W$,
\begin{itemize}
\item the test pair $(X,Y)$ is drawn from $\mathcal D$ independently of $S$;
\item the routing variable $T$ is sampled from $g_{W_{\text{gate}}}(X)$ using internal randomness that
is also independent of $S$ (once $W$ is fixed).
\end{itemize}
Therefore, the conditional law of $L$ depends on $S$ only through $W$, which is exactly
\eqref{eq:condind}. This is equivalent to the Markov chain
\begin{align}
\label{eq-markov}
  S \rightarrow W \rightarrow L.  
\end{align}
By the data-processing inequality, equation (\ref{eq-markov}) implies
\begin{align}
\label{eq:dpi}
I(S;L)\le I(S;W).
\end{align}

For bounded losses in $[0,1]$, Xu and Raginsky show (via their decoupling lemma) that
\begin{align}
\label{eq:xu-bound}
\big|\mathbb E[R(W)]-\mathbb E[R_S(W)]\big|
\le \sqrt{\frac{2}{m}\,I(S;L)}.
\end{align}
(Here $L$ is the single-sample test loss defined in \eqref{eq:test0loss}; see
\cite[Lemma~1 and Theorem~1]{xu2017information}.)

Combining \eqref{eq:xu-bound} with \eqref{eq:dpi} yields
\[
\big|\mathbb E[R(W)]-\mathbb E[R_S(W)]\big|
\le \sqrt{\frac{2}{m}\,I(S;W)},
\]
which is exactly \eqref{eq:itmoe}. \qedhere
\end{proof}

\section{Proof of Theorem~2}\label{proofofTheorem2}

\begin{proof}
For any two real numbers $a,b$, we have $a=b+(a-b)\le b+|a-b|$. Applying this with
$a=\mathbb E[R(W)]$ and $b=\mathbb E[R_S(W)]$ gives
\begin{align}
\label{eq:basic-decomp}
\mathbb E[R(W)]
\le \mathbb E[R_S(W)]
+ \big|\mathbb E[R(W)]-\mathbb E[R_S(W)]\big|.
\end{align}

By assumption \eqref{eq-expext-r-s}, the learned parameters satisfy
\begin{align}
\label{eq:assump}
\mathbb E[R_S(W)] \le D(R_g)+\delta_m,
\end{align}
where $R_g\triangleq I(X;T)$ is the \emph{achieved} gating rate induced by the learned gate and the data
distribution.
Substituting \eqref{eq:assump} into \eqref{eq:basic-decomp} yields
\begin{align}
\label{eq:reduce-to-gap}
\mathbb E[R(W)]
\le D(R_g)+\delta_m
+ \big|\mathbb E[R(W)]-\mathbb E[R_S(W)]\big|.
\end{align}

Define the instantaneous test loss on an independent test draw $(X,Y)\sim\mathcal D$ and internal gate
sample $T\sim g_{W_{\text{gate}}}(X)$ as
\begin{align}
    L \triangleq \ell\!\big(h_T(X;W_T),Y\big)\in[0,1].
\end{align}
Exactly as in the proof of Theorem~\ref{th:itmoe}, we have the Markov chain $S\to W\to L$, hence
$I(S;L)\le I(S;W)$.

Applying the Xu--Raginsky decoupling bound to this same $L$ gives
\begin{align}
\label{eq:xu-bound-2}
\big|\mathbb E[R(W)]-\mathbb E[R_S(W)]\big|
\le \sqrt{\frac{2}{m}\,I(S;L)}
\le \sqrt{\frac{2}{m}\,I(S;W)}.
\end{align}

Plugging \eqref{eq:xu-bound-2} into \eqref{eq:reduce-to-gap} yields
\begin{align}
\mathbb E[R(W)]
\le D(R_g)+\delta_m+\sqrt{\frac{2}{m}\,I(S;W)},   
\end{align}
which is the claim \eqref{eq:RDgen}. \qedhere
\end{proof}

\end{document}